\documentclass{article}

\pdfoutput=1

%


\usepackage[final]{nips_2017}

\usepackage[utf8]{inputenc} 
\usepackage[T1]{fontenc}    
\usepackage{hyperref}       
\usepackage{url}            
\usepackage{booktabs}       
\usepackage{amsfonts}       
\usepackage{nicefrac}       
\usepackage{microtype}      

\usepackage{graphicx} 

\usepackage[textfont=md,font=footnotesize]{caption}
\setlength{\belowcaptionskip}{-4pt}
\addtolength{\textfloatsep}{-2pt}

\usepackage[compact]{titlesec}

\title{Inverse Reward Design}

%


\author{
  Dylan Hadfield-Menell \hspace{10pt} Smitha Milli \hspace{10pt} Pieter Abbeel\thanks{OpenAI, International Computer Science Institute (ICSI)} \hspace{10pt} Stuart Russell \hspace{10pt} Anca D. Dragan\\\
  Department of Electrical Engineering and Computer Science\\
  University of California, Berkeley\\
  Berkeley, CA 94709 \\
  \texttt{\{dhm, smilli, pabbeel, russell, anca\}@cs.berkeley.edu}
}

\usepackage{amssymb}
\usepackage{amsmath}
\usepackage{amsthm}

\usepackage{color}

\newcommand{\mnor}{\mprox}
\newcommand{\rtrue}{\ensuremath{r^*}}
\newcommand{\agentmodel}{\ensuremath{\pi(\cdot | \rprox, \mnor)}}
\newcommand{\wprox}{\ensuremath{\overset{\sim}{w}}}
\newcommand{\wtrue}{\ensuremath{{w^*}}}
\newcommand{\zprox}{\ensuremath{\overset{\sim}{Z}}}
\newcommand{\rprox}{\ensuremath{\overset{\sim}{r}}}
\newcommand{\rstar}{\ensuremath{r^*}}
\newcommand{\proxspace}{\ensuremath{\overset{\sim}{\mathcal{R}}}}
\newcommand{\rspace}{\ensuremath{\mathcal{R}}}
\newcommand{\mprox}{\ensuremath{\overset{\sim}{M}}}

\newcommand{\piprox}{\ensuremath{\pi(\xi|\wprox,\mprox)}}

\newcommand{\phiprox}{\ensuremath{\overset{\sim}{\phi}}}

\newcommand{\secref}[1]{Section~\ref{#1}}

\newcommand{\eqnref}[1]{Equation~\ref{#1}}

\newcommand{\figref}[1]{Figure~\ref{#1}}

\newtheorem{defn}{Definition}

\newtheorem{assume}{Assumption}
\newtheorem{prop}{Proposition}

{\begin{list}{$\bullet$}{%
    \setlength{\topsep}{0in}
    \setlength{\partopsep}{0in}
    \setlength{\itemsep}{0in}
    \setlength{\parsep}{0in}
    \setlength{\leftmargin}{3.5em}
    \setlength{\rightmargin}{0in}
    \setlength{\itemindent}{-.1in}
}
}%
{\end{list}
}

\DeclareMathOperator*{\argmax}{argmax}
\DeclareMathOperator*{\expect}{\mathbb{E}}

\newcommand{\prg}[1]{\noindent\textbf{#1. }}

\begin{document}

\maketitle

\begin{abstract}
Autonomous agents optimize the reward function we give them. What they don't know is how hard it is for us to design a reward function that actually captures what we  want. When designing the reward, we might think of some specific training scenarios, and make sure that the reward will lead to the right behavior in \emph{those} scenarios. Inevitably, agents encounter \emph{new} scenarios (e.g., new types of terrain) where optimizing that same reward may lead to undesired behavior.
Our insight is that reward functions are merely \emph{observations} about what the designer \emph{actually} wants, and that they should be interpreted in the context in which they were designed. We introduce \emph{inverse reward design} (IRD) as the problem of inferring the true objective based on the designed reward and the training MDP. We introduce approximate methods for solving IRD problems, and use their solution to plan risk-averse behavior in test MDPs. Empirical results suggest that this approach can help alleviate negative side effects of misspecified reward functions and mitigate reward hacking.
\end{abstract}

\section{Introduction}
\label{sec-intro}
Robots\footnote{Throughout this paper, we will use
 robot to refer generically to any artificial agent.} are becoming more capable of optimizing their reward functions. But along with that
comes the burden of making sure we specify these reward functions correctly. Unfortunately, this is a notoriously difficult task.
 Consider the example from \figref{fig-teaser}. Alice, an AI engineer, wants to build a robot, we'll call it Rob, for mobile
 navigation. She wants it to reliably navigate to a target location and
 expects it to primarily encounter grass lawns and dirt pathways. She
 trains a perception system to identify each of these terrain types and
 then uses this to define a reward function that incentivizes moving
 towards the target quickly, avoiding grass where possible. When Rob is deployed into the world, it encounters a novel terrain
 type; for dramatic effect, we'll suppose that it is lava. The terrain prediction goes haywire on this out-of-distribution input and generates a meaningless classification which, in turn, produces an arbitrary reward evaluation. As a result, Rob might then drive to its demise. This failure occurs because the
 reward function Alice \emph{specified} implicitly through the terrain predictors, which ends up outputting arbitrary values for lava, is \emph{different} from the  one Alice \emph{intended}, which would actually penalize traversing lava.

In the terminology from \cite{amodei2016concrete}, this is a \emph{negative side effect} of a \emph{misspecified reward} --- a failure mode of reward design where leaving out important aspects leads to poor behavior. Examples date back to King Midas, who wished that everything he touched turn to gold, leaving out that he didn't mean his food or family. Another failure mode is \emph{reward hacking}, which happens when, e.g., a vacuum cleaner ejects collected dust so that it can collect even more~\citep{Russell+Norvig:2010}, or a racing boat in a game loops in place to collect points instead of actually winning the race~\citep{amodei2016faulty}. 
Short of requiring that the reward designer anticipate and penalize
 all possible misbehavior in advance, how can we alleviate the impact of such
 reward misspecification? 
 
\emph{We leverage a key insight: that the designed reward function should merely be an observation about the intended reward, rather than the definition; and should be interpreted in the context in which it was designed. }
First,
a robot should have uncertainty about its reward function, instead of
treating it as fixed. This enables it to, e.g., be risk-averse
when planning in scenarios where it is not clear what the right answer is, or to ask for help. Being uncertain about the true reward, however, is
only half the battle. To be effective, a robot must acquire the
\emph{right} kind of uncertainty, i.e. know what it knows and what it doesn't. We propose that the `correct' shape of this
uncertainty depends on the environment for which the reward was
designed. 

In Alice's case, the situations where she tested Rob's
learning behavior did not contain lava. Thus, the lava-\emph{avoiding}
reward would have produced \emph{the same behavior} as Alice's designed reward function in the (lava-free)
environments that Alice considered. A robot that knows the settings it
was evaluated in should also know that, even though the designer
specified a lava-\emph{agnostic} reward, they might have actually
meant the lava-\emph{avoiding} reward. Two reward functions that would produce similar behavior in
the training environment should be treated as equally likely, regardless of which one the designer actually specified.
We formalize this in a probabilistic
model that relates the proxy (designed) reward to the true reward via the
following assumption:
\vspace{-.5em}
\begin{assume}
\label{assume-proxy-reward}
Proxy reward functions are likely to the extent that they lead to high \textbf{true}
utility behavior in the \textbf{training} environment. 
\end{assume}
\vspace{-.5em}

\begin{figure}
\centering
\includegraphics[width=.9\columnwidth]{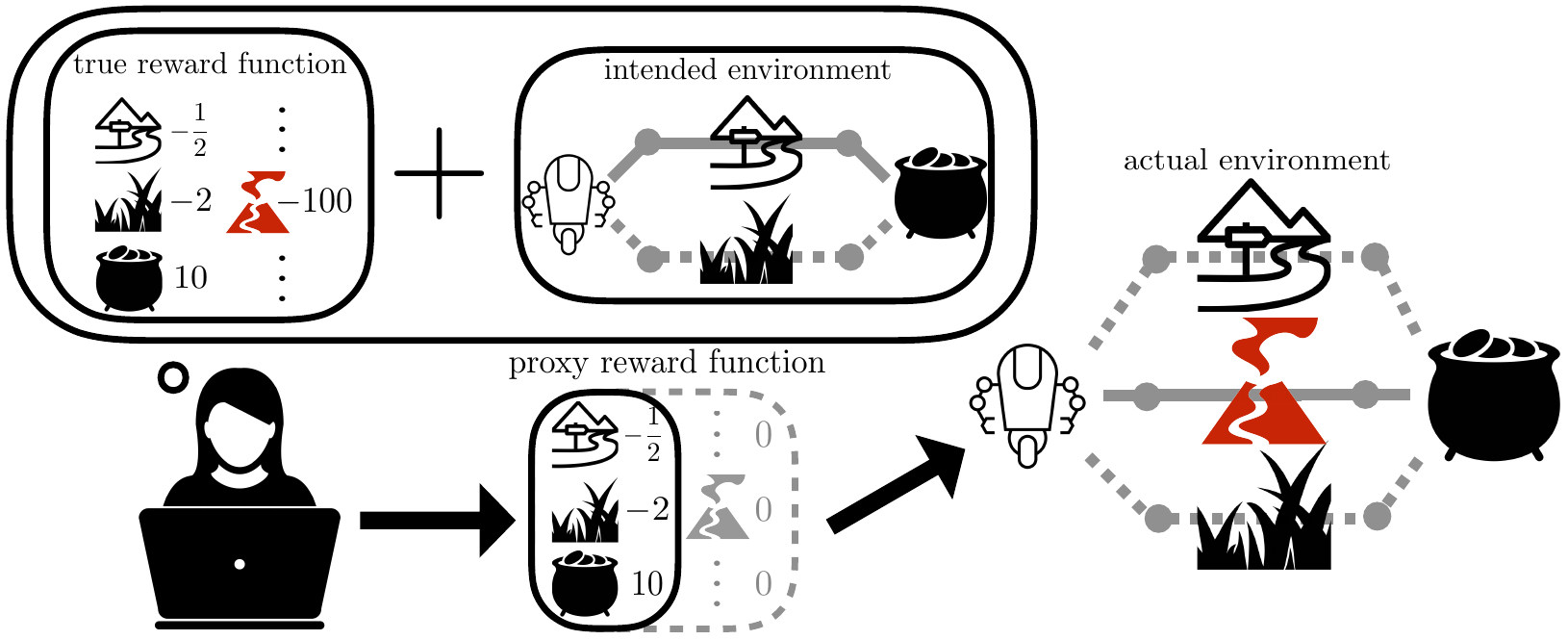}
\caption{{\small An illustration of a \emph{negative side effect}. Alice designs a reward function so that her robot navigates to the pot of gold and prefers dirt paths. She does not consider that her robot might encounter lava in the real world and leaves that out of her reward specification. The robot maximizing this \emph{proxy} reward function drives through the lava to its demise. In this work, we formalize the (Bayesian) \emph{inverse reward design} (IRD) problem as the problem of inferring (a distribution on) the \emph{true} reward function from the proxy. We show that IRD can help mitigate unintended consequences from misspecified reward functions like negative side effects and reward hacking.} } \label{fig-teaser}
\end{figure}

Formally, we assume that the observed proxy reward function is the approximate
solution to a \emph{reward design
  problem}~\citep{singh2010where}. Extracting the true reward is the
\emph{inverse} reward design problem. 

The idea of using human behavior as observations about the reward
function is far from new. Inverse reinforcement learning uses human
demonstrations~\citep{ng2000algorithms, ziebart2008maximum}, shared
autonomy uses human operator control
signals~\citep{javdani2015shared}, preference-based reward learning
uses answers to comparison queries~\citep{jain2015learning}, and even
what the human wants~\citep{HadfieldMenell2017offswitch}. We observe
that, \emph{even when the human behavior is to actually write down a
  reward function}, this should still be treated as an observation,
demanding its own observation model.

Our paper makes three contributions. First, we define the
\emph{inverse reward design} (IRD) problem as the problem of inferring
the true reward function given a proxy reward function, an intended
decision problem (e.g., an MDP), and a set of possible reward
functions. Second, we propose a solution to IRD and justify how an
intuitive algorithm which treats the proxy reward as a set of expert
demonstrations can serve as an effective approximation. Third, we show
that this inference approach, combined with risk-averse planning,
leads to algorithms that are robust to misspecified rewards, alleviating both negative side effects as well as reward hacking. We build
a system that `knows-what-it-knows' about reward evaluations that
automatically detects and avoids distributional shift in situations
with high-dimensional features. Our approach substantially outperforms
the baseline of literal reward interpretation.

\section{Inverse Reward Design}
\label{sec-ird}

 \begin{defn}(Markov Decision Process~\cite{puterman2009markov})
A (finite-horizon) \emph{Markov decision process} (MDP), $M$, is a
tuple $M = \langle \mathcal{S}, \mathcal{A}, T, r, H
\rangle$. $\mathcal{S}$ is a set of states. $\mathcal{A}$ is a set of
actions. $T$ is a probability distribution over the next state, given
the previous state and action. We write this as $T(s_{t+1}| s_t,
a)$. $r$ is a reward function that maps states to rewards
$r: \mathcal{S} \mapsto \mathbb{R}$. $H\in \mathbb{Z}_+$ is the finite planning
horizon for the agent.
\end{defn}

A solution to $M$ is a \emph{policy}: a mapping from the current
timestep and state to a distribution over actions. The optimal policy
maximizes the expected sum of rewards. We will use $\xi$ to represent trajectories. In this work, we consider reward functions that are linear combinations of feature
vectors $\phi(\xi).$ Thus, the reward for a trajectory, given weights $w$, is $r(\xi;~w)~=~w^\top\phi(\xi).$ 

The MDP formalism defines optimal behavior, given a reward
function. However, it provides no information about where this reward
function comes from~\citep{singh2010where}. We refer to an MDP without rewards as a \emph{world model}. In practice, a system designer needs to select a
reward function that encapsulates the intended behavior. This process
is \emph{reward engineering} or \emph{reward
design}:

\begin{defn} 
(Reward Design Problem~\citep{singh2010where}) A \emph{reward design problem} (RDP) is
  defined as a tuple $P = \langle \rstar, \mnor, \proxspace, \agentmodel\rangle$. \rstar{} is the true reward function. \mnor{} is a world model. \proxspace{} is
  a set of proxy reward functions. \agentmodel{} is an agent model, that defines a distribution on trajectories given a (proxy) reward function and a world model. 
\end{defn}

In an RDP, the designer believes that an agent, represented by the policy \agentmodel, will be deployed in \mnor. She must specify a \emph{proxy reward function} $\rprox
\in \proxspace$ for the agent. Her goal is to
specify \rprox{} so that \agentmodel{} obtains high
reward according to the true reward function \rstar. We let \wprox{} 
represent weights for the proxy reward function and \wtrue{}
represent weights for the true reward function.


In this work, our motivation is that system designers are fallible, so we should not expect that they perfectly solve the reward design problem. Instead we consider the case where the system designer is approximately optimal at solving a known RDP, which is distinct from the MDP that the robot currently finds itself in. By inverting the reward design process to infer (a distribution on) the
true reward function \rtrue, the robot can understand where its reward evaluations have high variance and plan to avoid those states. We refer to this inference problem as the \emph{inverse reward design} problem:

\begin{defn}
(Inverse Reward Design) The \emph{inverse reward design} (IRD) problem is defined by a tuple $\langle \rspace, \mnor,
  \proxspace, \agentmodel, \rprox \rangle$. \rspace{} is a space of
  possible reward functions. \mnor{} is a world model. $\langle -, \mnor, \proxspace, \agentmodel \rangle$  partially specifies an RDP $P$, with an unobserved reward function $\rtrue\in \rspace$. $\rprox \in \proxspace$ is the observed proxy reward that is an (approximate) solution to $P$.
\end{defn}
In solving an IRD problem, the goal is to recover \rtrue. We will explore Bayesian approaches to IRD, so we will assume a prior distribution on \rtrue{} and infer a posterior distribution on \rtrue{} given \rprox{} $P(\rtrue | \rprox, \mnor)$.

\section{Related Work}
\label{sec-related}

 \prg{Optimal reward design} \cite{singh2010where} formalize and
 study the problem of designing optimal rewards. They consider a
 designer faced with a distribution of environments, a class of reward
 functions to give to an agent, and a \emph{fitness} function. They
 observe that, in the case of bounded agents, it may be optimal to
 select a proxy reward that is distinct from the fitness
 function. \cite{sorg2010reward} and subsequent work has studied the
 computational problem of selecting an optimal proxy reward. 

In our work, we consider an alternative situation where
the \emph{system designer} is the bounded agent. In this case, the proxy
reward function is distinct from the fitness function -- the true
utility function in our terminology -- because system designers can make
mistakes. IRD formalizes the problem of determining a true utility
function given an observed proxy reward function. This enables us to
design agents that are robust to misspecifications in their reward
function.

\prg{Inverse reinforcement learning} In \emph{inverse reinforcement
 learning}
 (IRL) \citep{ng2000algorithms,ziebart2008maximum,evans2016learning,syed2007game}
 the agent observes demonstrations of (approximately) optimal behavior
 and infers the reward function being optimized. IRD is a similar
 problem, as both approaches infer an unobserved reward function. The difference is in the observation: IRL  observes behavior, while IRD directly observes a reward function. Key to IRD is assuming that this observed reward incentivizes 
 behavior that is approximately optimal with respect to the true reward. 
 In \secref{sec:ird-approximations}, we show how ideas from IRL can be used to approximate IRD. 
 Ultimately, we
 consider both IRD and IRL to be complementary strategies for \emph{value
 alignment}~\citep{cirl16}: approaches that allow designers or users to
 communicate preferences or goals.


 \prg{Pragmatics} The \emph{pragmatic} interpretation of language is
 the interpretation of a phrase or utterance in the context of
 alternatives~\citep{grice1975logic}. For
 example, the utterance ``some of the apples are red'' is often
 interpreted to mean that ``not all of the apples are red'' although
 this is not literally implied. This is because, in context, we typically
 assume that a speaker who meant to say ``all the apples are red''
 would simply say so.

 Recent models of pragmatic language interpretation use two levels of
 Bayesian reasoning~\citep{frank2009informative, goodman2014probabilistic}. At the lowest level, there is a
 literal listener that interprets language according to a shared
 literal definition of words or utterances. Then, a speaker selects
 words in order to convey a particular meaning to the literal
 listener. To model pragmatic inference, we consider the probable meaning
 of a given utterance from this speaker. We can think of IRD as a
 model of pragmatic reward interpretation: the speaker in pragmatic
 interpretation of language is directly analogous to the reward
 designer in IRD.

\section{Approximating the Inference over True Rewards}
\label{sec-generative}

We solve IRD problems by formalizing Assumption~\ref{assume-proxy-reward}: the idea that proxy reward functions are
 likely to the extent that they incentivize high utility behavior in the training MDP. This will give us a probabilistic model for how \wprox{} is generated from the true \wtrue{} and the training MDP \mprox{}. We will invert this probability model to compute a
 distribution $P(w=\wtrue|\wprox,\mprox)$ on the true utility
function.
 
 \subsection{Observation Model} Recall that \piprox{} is the designer's model of the probability that the robot will select trajectory $\xi$, given proxy reward \wprox. We will assume that \piprox{} is the maximum entropy trajectory
distribution from \cite{ziebart2008maximum}, i.e. the designer models the robot as approximately optimal: $\piprox~\propto~\exp(w^\top~\phi(\xi)).$  An optimal designer chooses \wprox{} to maximize expected true value, i.e. $\mathbb{E}[\wtrue^\top \phi(\xi) | \xi \sim \piprox]$ is high. We model an approximately optimal designer:



\begin{equation}
 P(\wprox | \wtrue,\mprox) \propto \exp\left(\beta \expect \left[\wtrue^\top \phi(\xi) | \xi \sim \piprox \right]\right) \label{eq-rdp}
\end{equation} with $\beta$ controlling how close to optimal we assume the person to be.
This is now a formal statement of Assumption~\ref{assume-proxy-reward}. \wtrue{} can be pulled out of the expectation, so we let 
$\phiprox~=~\expect[\phi(\xi)|\xi\sim\piprox].$ Our goal is to invert \eqref{eq-rdp} and sample from (or otherwise
estimate) $P(\wtrue|\wprox,\mprox)~\propto~P(\wprox|\wtrue,\mprox)P(\wtrue).$
The primary difficulty this entails is that we need to know the
\emph{normalized} probability $P(\wprox | \wtrue, \mprox).$ This depends on its normalizing constant, $\zprox(w)$, which integrates over possible proxy rewards.
\begin{equation}
 P(w=\wtrue | \wprox, \mprox) \propto \frac{\exp\left(\beta w^\top \phiprox \right)}{\zprox(w)}P(w), 
\zprox(w) = \int_{\wprox} \exp\left(\beta w^\top \phiprox \right) d\wprox. \label{eq-irdp}
\end{equation}


\subsection{Efficient approximations to the IRD posterior}
\label{sec:ird-approximations}

To compute $P(w=\wtrue | \wprox, \mprox)$, we must compute \zprox{}, which is intractable if $\wprox$ lies in an
infinite or large finite set. Notice that computing the value of the
integrand for \zprox{} is highly non-trivial as it involves solving a planning problem. This is an example of what is referred to as a
\emph{doubly-intractable} likelihood~\citep{murray2006mcmc}. We consider two methods to approximate this normalizing constant. 


\begin{figure*}
\centering
\includegraphics[width=\textwidth]{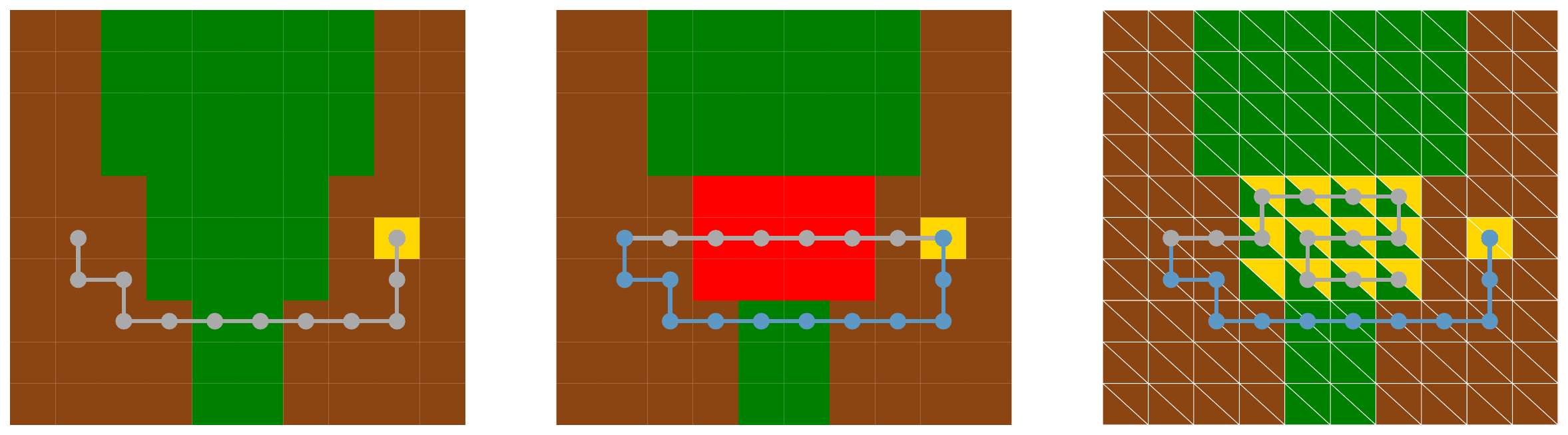}
\caption{{\small An example from the Lavaland domain. \textbf{Left:} The training MDP where the designer specifies a proxy reward function. This incentivizes movement toward targets (yellow) while preferring dirt (brown) to grass (green), and generates the gray trajectory. \textbf{Middle:} The testing MDP has lava (red). The proxy does not penalize lava, so optimizing it makes the agent go straight through (gray). This is a negative side effect, which the IRD agent avoids (blue): it treats the proxy as an observation in the context of the training MDP, which makes it realize that it cannot trust the (implicit) weight on lava. \textbf{Right:} The testing MDP has cells in which two sensor indicators no longer correlate: they look like grass to one sensor but target to the other. The proxy puts weight on the first, so the literal agent goes to these cells (gray). The IRD agent knows that it can't trust the distinction and goes to the target on which both sensors agree (blue). } } \label{fig-lavaland}
\end{figure*}

\prg{Sample to approximate the normalizing constant}
This approach, inspired by methods in approximate Bayesian
computation~\citep{sunnaaker2013approximate}, samples a finite set of weights $\{w_i\}$ to approximate the integral in \eqnref{eq-irdp}. We found empirically that it helped to include the candidate sample $w$ in the sum. This leads to the normalizing constant
\begin{equation}
\hat{Z}(w) = \exp\left(\beta w^\top \phi_w\right) + \sum_{i=0}^{N-1} \exp\left(\beta w^\top \phi_i\right).
\end{equation}
Where $\phi_i$ and $\phi_w$ are the vector of feature counts realized optimizing $w_i$ and $w$ respectively.

\prg{Bayesian inverse reinforcement learning} During inference, the normalizing constant serves a calibration purpose: it computes how good the behavior produced by all proxy rewards in that MDP would be with respect to the true reward. Reward functions which increase the reward for \emph{all} trajectories are not preferred in the inference. This creates an invariance to linear shifts in the feature encoding. If we were to change the MDP by shifting features by some vector $\phi_0$, $\phi \leftarrow \phi+\phi_0$, the posterior over $w$ would remain the same.

We can achieve a similar calibration and maintain the same property by directly integrating over the possible trajectories in the MDP:
\begin{equation}
Z(w) = \left(\int_\xi \exp(w^\top \phi(\xi)) d\xi \right)^\beta; \ \ \hat{P}(w | \wprox) \propto \frac{\exp\left( \beta w^\top
  \phiprox\right)}{Z(w)}\label{eq-irl}
\end{equation}

\begin{prop}
The posterior distribution that the IRD model induces on \wtrue{} (i.e., \eqnref{eq-irdp}) and the posterior distribution induced by IRL (i.e., \eqnref{eq-irl}) are invariant to linear translations of the features in the training MDP.
\end{prop}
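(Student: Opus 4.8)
The plan is to show that translating the feature map of the training MDP, $\phi(\cdot)\leftarrow\phi(\cdot)+\phi_0$ for a fixed $\phi_0$, multiplies \emph{both} the numerator and the normalizing constant of each posterior by the \emph{same} factor $\exp(\beta\,w^\top\phi_0)$, so that the quotient --- hence the unnormalized posterior over $w$, and therefore also its normalization --- is unchanged pointwise in $w$.

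First I would record the single structural fact the argument rests on: the designer's trajectory distribution $\piprox\propto\exp(\wprox^\top\phi(\xi))$ is itself invariant under $\phi\leftarrow\phi+\phi_0$, because $\exp(\wprox^\top(\phi(\xi)+\phi_0))=\exp(\wprox^\top\phi_0)\exp(\wprox^\top\phi(\xi))$ and the leading factor is constant in $\xi$, so it cancels in the normalization of \piprox. Consequently the expected feature vector transforms affinely, $\phiprox\leftarrow\phiprox+\phi_0$, and, writing $\phi_{w'}=\expect[\phi(\xi)\mid\xi\sim\pi(\xi\mid w',\mprox)]$ for the dummy proxy $w'$ appearing inside the integral, likewise $\phi_{w'}\leftarrow\phi_{w'}+\phi_0$ for every $w'$. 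For the IRD posterior, \eqnref{eq-irdp}, the numerator $\exp(\beta w^\top\phiprox)$ then becomes $\exp(\beta w^\top\phi_0)\exp(\beta w^\top\phiprox)$, while $\zprox(w)=\int_{\wprox}\exp(\beta w^\top\phi_{\wprox})\,d\wprox$ becomes $\int_{\wprox}\exp(\beta w^\top\phi_0)\exp(\beta w^\top\phi_{\wprox})\,d\wprox=\exp(\beta w^\top\phi_0)\,\zprox(w)$, since $\exp(\beta w^\top\phi_0)$ does not depend on the integration variable. The two factors cancel in the ratio, the prior $P(w)$ is untouched, so the unnormalized posterior is pointwise unchanged; integrating over $w$ then shows the normalized posterior is unchanged as well. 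For the IRL posterior, \eqnref{eq-irl}, the identical computation applies: $Z(w)=\big(\int_\xi\exp(w^\top\phi(\xi))\,d\xi\big)^\beta$ picks up the factor $\big(\exp(w^\top\phi_0)\big)^\beta=\exp(\beta w^\top\phi_0)$, again matching the factor acquired by the numerator, so the ratio is invariant.

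I do not expect a genuine obstacle here; the only point requiring care is bookkeeping --- making explicit that the translation acts on \emph{every} occurrence of $\phi$ simultaneously (in \piprox, in \phiprox, and in the dummy $\phi_{\wprox}$ under the $\zprox$ integral), and observing that although the shared factor $\exp(\beta w^\top\phi_0)$ depends on $w$, it is precisely the "calibration" term the normalizing constant is designed to supply, and so it cancels rather than reshaping the posterior. A one-line remark that $\beta$ and the support of the reward space are held fixed completes the argument.
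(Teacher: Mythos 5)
Your proposal is correct and follows essentially the same route as the paper: both arguments observe that the feature shift leaves the planner's behavior (hence $\phiprox$) shifted by exactly $\phi_0$, so the numerator and the normalizing constant each acquire the common factor $\exp(\beta w^\top \phi_0)$, which cancels in the ratio for both \eqnref{eq-irdp} and \eqnref{eq-irl}. Your explicit cancellation of the constant in the normalization of $\piprox$ is a slightly more careful justification than the paper's appeal to linearity of the Bellman backup, but it is the same argument.
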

\begin{proof}  First, we observe that this shift does not change the behavior of the planning agent due to linearity of the Bellman backup operation, i.e., $\phiprox' = \phiprox + \phi_0$. In \eqnref{eq-irdp} linearity of expectation allows us to pull $\phi_0$ out of the expectation to compute \phiprox:
\begin{align}
    \frac{\exp\left(\beta w^\top \phiprox' \right)}{\int_{\wprox} \exp\left(\beta w^\top \phiprox' \right)d\wprox} &= \frac{\exp\left(\beta w^\top \phi_0 \right)\exp\left(\beta w^\top \phiprox \right)}{\int_{\wprox} \exp\left(\beta w^\top \phi_0 \right)\exp\left(\beta w^\top \phiprox \right)d\wprox}\\
    &= \frac{\exp\left(\beta w^\top \phiprox \right)}{\int_{\wprox} \exp\left(\beta w^\top \phiprox \right)d\wprox}
\end{align}
This shows that \eqnref{eq-irdp} is invariant to constant shifts in the feature function. The same argument applies to \eqnref{eq-irl}.
\end{proof}





This choice of normalizing constant approximates the posterior to an IRD problem with the posterior from maximum entropy IRL~\citep{ziebart2008maximum}. The result has an intuitive interpretation. The proxy \wprox{} determines the average feature counts for a hypothetical dataset of expert demonstrations and $\beta$ determines the effective size of that dataset.  The agent solves \mprox{} with reward \wprox{} and computes the corresponding feature expectations \phiprox. The agent then pretends like it got $\beta$ demonstrations with features counts \phiprox, and runs IRL. The more the robot believes the human is good at reward design, the more demonstrations it pretends to have gotten from the person. The fact that reducing the proxy to behavior in \mprox{} approximates IRD is not surprising:  the main point of IRD is that the proxy \emph{reward} is merely a statement about what \emph{behavior} is good in the training environment.

\section{Evaluation}
\label{sec-experiments}

\subsection{Experimental Testbed}

We evaluated our approaches in a model of the scenario from \figref{fig-teaser} that we call \emph{Lavaland}. Our system designer, Alice, is programming a mobile robot, Rob. We model this as a gridworld with movement in the four cardinal directions and four terrain types: target, grass, dirt, and lava. The true objective for Rob, \wtrue, encodes that it should get to the target quickly, stay off the grass, and avoid lava. Alice designs a proxy that performs well in a training MDP that does not contain lava. Then, we measure Rob's performance in a test MDP that \emph{does} contain lava. Our results show that combining IRD and risk-averse planning creates incentives for Rob to avoid unforeseen scenarios.



We experiment with four variations of this environment: two proof-of-concept conditions in which the reward is misspecified, but the agent has direct access to feature indicators for the different categories (i.e. conveniently having a feature for lava); and two challenge conditions, in which the right features are \emph{latent}; the reward designer does not build an indicator for lava, but by reasoning in the raw observation space and then using risk-averse planning, the IRD agent still avoids lava.

\subsubsection{Proof-of-Concept Domains}
These domains contain feature indicators for the four categories: grass, dirt, target, and lava.

\prg{Side effects in Lavaland}
Alice expects Rob to encounter 3 types of terrain: grass,
dirt, and target, and so she only considers the training MDP from \figref{fig-lavaland} (left). She provides a \wprox{} to encode a
trade-off between path length and time spent on grass. 

The training MDP contains no lava, but it is introduced when Rob is deployed.
An agent that treats the proxy reward literally might go on the lava in the test MDP. However, an agent that runs IRD will know that it can't trust the weight on the lava indicator, since all such weights would produce the same behavior in the training MDP (\figref{fig-lavaland}, middle).
\begin{figure}
    \centering
    \includegraphics[width=.75\textwidth]{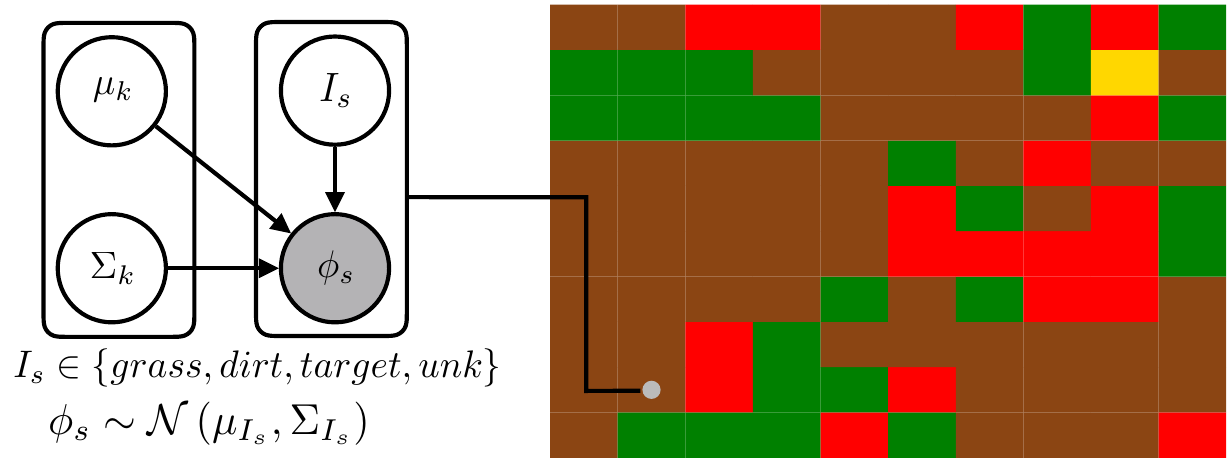}
    \caption{{\small Our challenge domain with latent rewards. Each terrain type (grass, dirt, target, lava) induces a different distribution over high-dimensional features: $\phi_{s} \sim \mathcal{N}(\mu_{I_{s}}, \Sigma_{I_{s}})$. The designer never builds an indicator for lava, and yet the agent still needs to avoid it in the test MDPs.} }
    \label{fig:hard-exps}
\end{figure}

\prg{Reward Hacking in Lavaland}
Reward hacking refers generally to reward functions that can be gamed
or tricked. 
To model this within Lavaland, we use features that are correlated
in the training domain but are uncorrelated in the testing
environment. There are 6 features: three from one sensor and three from another sensor. In the training environment the features from both sensors are correct indicators of the state's terrain category (grass, dirt, target).


At test time, this correlation gets broken: lava looks
like the target category to the second sensor, but the grass category to the first sensor. This is akin to how in a racing game~\citep{amodei2016faulty}, winning and game points can be correlated at reward design time, but test environments might contain loopholes for maximizing points without winning. We want agents to hedge their bets between winning and points, or, in Lavaland, between the two sensors. An agent that
treats the proxy reward function literally might go to these new cells if they are closer. In contrast, an agent that runs IRD will know that a reward function with the same weights put on the first sensor is just as likely as the proxy. Risk averse planning makes it go to the target for which both sensors agree (\figref{fig-lavaland}, right).

\subsubsection{Challenge Domain: Latent Rewards, No More Feature Indicators}
The previous examples allow us to
explore reward hacking and negative side effects in an isolated
experiment, but are unrealistic as they assume the existence of a feature indicator for unknown, unplanned-for terrain.
To investigate misspecified objectives in a more realistic
setting, we shift to the terrain type being latent, and inducing raw observations: we use a model where the terrain category determines the
mean and variance of a multivariate Gaussian distribution over observed features. \figref{fig:hard-exps} shows a depiction of this scenario. The designer has in mind a proxy reward on dirt, target, and grass, but \emph{forgets that lava might exist}. We consider two realistic ways through which a designer might actually specify the proxy reward function, which is based on the terrain types that the robot does not have access to: 1) directly on the \textbf{raw observations} --- collect samples of the training terrain types (dirt, grass, target) and train a (linear) reward predictor; or 2) \textbf{classifier features} --- build a classifier to classify terrain as dirt, grass, or target, and define a proxy on its output. 

Note that this domain allows for both negative side effects and reward hacking. Negative side effects can occur because the feature distribution for lava is different from the feature distribution for the three safe categories, and the proxy reward is trained only on the three safe categories. Thus in the testing MDP, the evaluation of the lava cells
will be arbitrary so maximizing the proxy reward will likely lead the
agent into lava. Reward hacking occurs when features that are correlated for the safe categories are uncorrelated for the lava category.

\subsection{Experiment}

\prg{Lavaland Parameters} We defined a distribution on map layouts with a log likelihood function that prefers maps where neighboring grid cells are the same. We mixed this log likelihood with a quadratic cost for deviating from a target ratio of grid cells to ensure similar levels of the lava feature in the testing MDPs. Our training MDP is 70\% dirt and 30\% grass. Our testing MDP is 5\% lava, 66.5\% dirt, and 28.5\% grass. 

In the proof-of-concept experiments, we selected the proxy reward function uniformly at random. For latent rewards, we picked a proxy reward function that evaluated to $+1$ for target, $+.1$ for dirt, and $-.2$ for grass. To define a proxy on raw observations, we sampled 1000 examples of grass, dirt, and target and did a linear regression. With classifier features, we simply used the target rewards as the weights on the classified features.  We used 50 dimensions for our feature vectors. We selected trajectories via \emph{risk-averse trajectory optimization}. Details of our planning method, and our approach and rationale in selecting it can be found in the supplementary material. 

\prg{IVs and DVs} We measured the fraction of runs that encountered a lava cell on the test MDP as our dependent measure. This tells us the proportion of trajectories where the robot gets 'tricked' by the misspecified reward function; if a grid cell has never been seen then a conservative robot should plan to avoid it. We manipulate two factors: \textbf{literal-optimizer} and \textbf{Z-approx}. \textbf{literal-optimizer} is true if the robot interprets the proxy reward literally and false otherwise. \textbf{Z-approx} varies the approximation technique used to compute the IRD posterior. It varies across the two levels described in \secref{sec:ird-approximations}: sample to approximate the normalizing constant (\textbf{Sample-Z}) or use the normalizing constant from maximum entropy IRL (\textbf{MaxEnt-Z})~\citep{ziebart2008maximum}. 

\begin{figure}
\centering
\includegraphics[width=.98\textwidth]{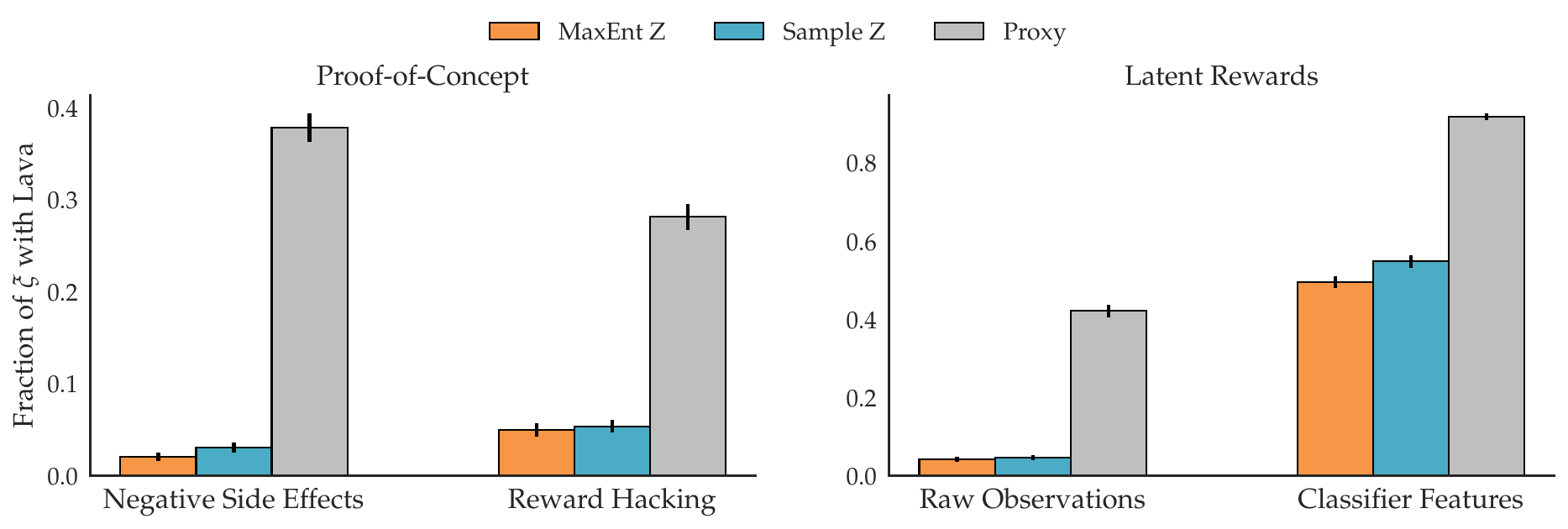}
\caption{{\small The results of our experiment comparing our proposed method to a baseline that directly plans with the proxy reward function. By solving an inverse reward design problem, we are able to create generic incentives to avoid unseen or novel states.} \vspace{-10pt}} \label{fig-results}
\end{figure}

\prg{Results}\figref{fig-results} compares the approaches. On the left, we see that IRD alleviates negative side effects (avoids the lava) and reward hacking (does not go as much on cells that look deceptively like the target to one of the sensors). This is important, in that the same inference method generalizes across different consequences of misspecified rewards. \figref{fig-lavaland} shows example behaviors.

In the more realistic latent reward setting, the IRD agent avoids the lava cells despite the designer forgetting to penalize it, and despite not even having an indicator for it: because lava is latent in the space, and so reward functions that would \emph{implicitly} penalize lava are as likely as the one actually specified, risk-averse planning avoids it.

We also see a distinction between {raw observations} and {classifier features}. The first essentially matches the proof-of-concept results (note the different axes scales), while the latter is much more difficult across all methods. The proxy performs worse because each grid cell is classified before being evaluated, so there is a relatively good chance that at least one of the lava cells is misclassified as target. IRD performs worse because the behaviors considered in inference plan in the already  classified terrain: a non-linear transformation of the features. The inference must both determine a good linear reward function to match the behavior \emph{and} discover the corresponding uncertainty about it. When the proxy is a linear function of raw observations, the first job is considerably easier.

\section{Discussion}
\label{sec-conclusion}

\noindent\textbf{Summary.} In this work, we motivated and introduced the \emph{Inverse Reward Design} problem as an approach to mitigate the risk from misspecified objectives. We introduced an observation model, identified the challenging inference problem this entails, and gave several simple approximation schemes. Finally, we showed how to use the solution to an inverse reward design problem to avoid side effects and reward hacking in a 2D navigation problem. We showed that we are able to avoid these issues reliably in simple problems where features are binary indicators of terrain type. Although this result is encouraging, in real problems we won't have convenient access to binary indicators for what matters. Thus, our challenge evaluation domain gave the robot access to only a high-dimensional observation space. The reward designer specified a reward based on this observation space which forgets to penalize a rare but catastrophic terrain. IRD inference still enabled the robot to  understand that rewards which would implicitly penalize the catastrophic terrain are also likely.

\noindent\textbf{Limitations and future work.} IRD gives the robot a posterior distribution over reward functions, but much work remains in understanding how to best leverage this posterior. Risk-averse planning can work sometimes, but it has the limitation that the robot does not just avoid bad things like lava, it also avoids potentially good things, like a giant pot of gold. We anticipate that leveraging the IRD posterior for follow-up queries to the reward designer will be key to addressing misspecified objectives.

Another limitation stems from the complexity of the environments and reward functions considered here. The approaches we used in this work rely on explicitly solving a planning problem, and this is a bottleneck during inference. In future work, we plan to explore the use of different agent models that plan approximately or leverage, e.g., meta-learning~\citep{duanRL2017} to scale IRD up to complex environments. Another key limitation is the use of linear reward functions. We cannot expect IRD to perform well unless the prior places weights on (a reasonable approximation to) the true reward function. If, e.g., we encoded terrain types as RGB values in Lavaland, there is unlikely to be a reward function in our hypothesis space that represents the true reward well. 

Finally, this work considers one relatively simple error model for the designer. This encodes some implicit assumptions about the nature and likelihood of errors (e.g., IID errors). In future work, we plan to investigate more sophisticated error models that allow for systematic biased errors from the designer and perform human subject studies to empirically evaluate these models. 

Overall, we are excited about the implications IRD has not only in the short term, but also about its contribution to the general study of the value alignment problem.

\section*{Acknowledgements}
This work was supported by the Center for Human Compatible AI and the Open Philanthropy Project, the Future of Life Institute, AFOSR, and NSF Graduate Research Fellowship Grant No. DGE 1106400.





\bibliography{biblio}

\begin{thebibliography}{24}
\providecommand{\natexlab}[1]{#1}
\providecommand{\url}[1]{\texttt{#1}}
\expandafter\ifx\csname urlstyle\endcsname\relax
  \providecommand{\doi}[1]{doi: #1}\else
  \providecommand{\doi}{doi: \begingroup \urlstyle{rm}\Url}\fi

\bibitem[Amodei \& Clark(2016)Amodei and Clark]{amodei2016faulty}
Amodei, Dario and Clark, Jack.
\newblock {Faulty Reward Functions in the Wild}.
\newblock \url{https://blog.openai.com/faulty-reward-functions/}, 2016.

\bibitem[Amodei et~al.(2016)Amodei, Olah, Steinhardt, Christiano, Schulman, and
  Man{\'{e}}]{amodei2016concrete}
Amodei, Dario, Olah, Chris, Steinhardt, Jacob, Christiano, Paul, Schulman,
  John, and Man{\'{e}}, Dan.
\newblock {Concrete Problems in AI Safety}.
\newblock \emph{CoRR}, abs/1606.06565, 2016.
\newblock URL \url{http://arxiv.org/abs/1606.06565}.

\bibitem[Duan et~al.(2016)Duan, Schulman, Chen, Bartlett, Sutskever, and
  Abbeel]{duanRL2017}
Duan, Yan, Schulman, John, Chen, Xi, Bartlett, Peter~L., Sutskever, Ilya, and
  Abbeel, Pieter.
\newblock {RL}$^2$: {Fast Reinforcement Learning via Slow Reinforcement
  Learning}.
\newblock \emph{CoRR}, abs/1611.02779, 2016.
\newblock URL \url{http://arxiv.org/abs/1611.02779}.

\bibitem[Evans et~al.(2016)Evans, Stuhlm{\"u}ller, and
  Goodman]{evans2016learning}
Evans, Owain, Stuhlm{\"u}ller, Andreas, and Goodman, Noah~D.
\newblock {Learning the Preferences of Ignorant, Inconsistent Agents}.
\newblock In \emph{Proceedings of the Thirtieth AAAI Conference on Artificial
  Intelligence}, pp.\  323--329. AAAI Press, 2016.

\bibitem[Frank et~al.(2009)Frank, Goodman, Lai, and
  Tenenbaum]{frank2009informative}
Frank, Michael~C, Goodman, Noah~D, Lai, Peter, and Tenenbaum, Joshua~B.
\newblock {Informative Communication in Word Production and Word Learning}.
\newblock In \emph{Proceedings of the 31st Annual Conference of the Cognitive
  Science Society}, pp.\  1228--1233. Cognitive Science Society Austin, TX,
  2009.

\bibitem[Goodman \& Lassiter(2014)Goodman and
  Lassiter]{goodman2014probabilistic}
Goodman, Noah~D and Lassiter, Daniel.
\newblock {Probabilistic Semantics and Pragmatics: Uncertainty in Language and
  Thought}.
\newblock \emph{Handbook of Contemporary Semantic Theory. Wiley-Blackwell}, 2,
  2014.

\bibitem[Grice(1975)]{grice1975logic}
Grice, H.~Paul.
\newblock \emph{Logic and Conversation}, pp.\  43--58.
\newblock Academic Press, 1975.

\bibitem[Hadfield-Menell et~al.(2016)Hadfield-Menell, Dragan, Abbeel, and
  Russell]{cirl16}
Hadfield-Menell, Dylan, Dragan, Anca, Abbeel, Pieter, and Russell, Stuart.
\newblock {Cooperative Inverse Reinforcement Learning}.
\newblock In \emph{Proceedings of the Thirtieth Annual Conference on Neural
  Information Processing Systems}, 2016.

\bibitem[Hadfield{-}Menell et~al.(2017)Hadfield{-}Menell, Dragan, Abbeel, and
  Russell]{HadfieldMenell2017offswitch}
Hadfield{-}Menell, Dylan, Dragan, Anca~D., Abbeel, Pieter, and Russell,
  Stuart~J.
\newblock {The Off-Switch Game}.
\newblock In \emph{Proceedings of the International Joint Conference on
  Artificial Intelligence}, 2017.

\bibitem[Jain et~al.(2015)Jain, Sharma, Joachims, and Saxena]{jain2015learning}
Jain, Ashesh, Sharma, Shikhar, Joachims, Thorsten, and Saxena, Ashutosh.
\newblock {Learning Preferences for Manipulation Tasks from Online Coactive
  Feedback}.
\newblock \emph{The International Journal of Robotics Research}, 34\penalty0
  (10):\penalty0 1296--1313, 2015.

\bibitem[Javdani et~al.(2015)Javdani, Bagnell, and
  Srinivasa]{javdani2015shared}
Javdani, Shervin, Bagnell, J.~Andrew, and Srinivasa, Siddhartha~S.
\newblock {Shared Autonomy via Hindsight Optimization}.
\newblock In \emph{Proceedings of Robotics: Science and Systems XI}, 2015.
\newblock URL \url{http://arxiv.org/abs/1503.07619}.

\bibitem[Markowitz(1968)]{markowitz1968portfolio}
Markowitz, Harry~M.
\newblock \emph{Portfolio selection: efficient diversification of investments},
  volume~16.
\newblock Yale university press, 1968.

\bibitem[Murray et~al.(2006)Murray, Ghahramani, and MacKay]{murray2006mcmc}
Murray, Iain, Ghahramani, Zoubin, and MacKay, David.
\newblock {MCMC for Doubly-Intractable Distributions}.
\newblock In \emph{Proceedings of the Twenty-Second Conference on Uncertainty
  in Artificial Intelligence}, 2006.

\bibitem[Ng \& Russell(2000)Ng and Russell]{ng2000algorithms}
Ng, Andrew~Y and Russell, Stuart~J.
\newblock {Algorithms for Inverse Reinforcement Learning}.
\newblock In \emph{Proceedings of the Seventeenth International Conference on
  Machine Learning}, pp.\  663--670, 2000.

\bibitem[Puterman(2009)]{puterman2009markov}
Puterman, Martin~L.
\newblock \emph{Markov Decision Processes: Discrete Stochastic Dynamic
  Programming}.
\newblock John Wiley \& Sons, 2009.

\bibitem[Rockafellar \& Uryasev(2000)Rockafellar and
  Uryasev]{rockafellar2000optimization}
Rockafellar, R~Tyrrell and Uryasev, Stanislav.
\newblock Optimization of conditional value-at-risk.
\newblock \emph{Journal of risk}, 2:\penalty0 21--42, 2000.

\bibitem[Russell \& Norvig(2010)Russell and Norvig]{Russell+Norvig:2010}
Russell, Stuart and Norvig, Peter.
\newblock \emph{Artificial Intelligence: {A} Modern Approach}.
\newblock Pearson, 2010.

\bibitem[Singh et~al.(2010)Singh, Lewis, , and Barto]{singh2010where}
Singh, Satinder, Lewis, {Richard L.}, , and Barto, {Andrew G.}
\newblock {Where do rewards come from?}
\newblock In \emph{Proceedings of the International Symposium on AI Inspired
  Biology - A Symposium at the AISB 2010 Convention}, pp.\  111--116, 2010.
\newblock ISBN 1902956923.

\bibitem[Sorg et~al.(2010)Sorg, Lewis, and Singh]{sorg2010reward}
Sorg, Jonathan, Lewis, Richard~L, and Singh, Satinder~P.
\newblock {Reward Design via Online Gradient Ascent}.
\newblock In \emph{Proceedings of the Twenty-Third Conference on Neural
  Information Processing Systems}, pp.\  2190--2198, 2010.

\bibitem[Sunn{\aa}ker et~al.(2013)Sunn{\aa}ker, Busetto, Numminen, Corander,
  Foll, and Dessimoz]{sunnaaker2013approximate}
Sunn{\aa}ker, Mikael, Busetto, Alberto~Giovanni, Numminen, Elina, Corander,
  Jukka, Foll, Matthieu, and Dessimoz, Christophe.
\newblock {Approximate {B}ayesian {C}omputation}.
\newblock \emph{PLoS Comput Biol}, 9\penalty0 (1):\penalty0 e1002803, 2013.

\bibitem[Syed \& Schapire(2007)Syed and Schapire]{syed2007game}
Syed, Umar and Schapire, Robert~E.
\newblock {A Game-Theoretic Approach to Apprenticeship Learning}.
\newblock In \emph{Proceedings of the Twentieth Conference on Neural
  Information Processing Systems}, pp.\  1449--1456, 2007.

\bibitem[Syed et~al.(2008)Syed, Bowling, and Schapire]{syed2008apprenticeship}
Syed, Umar, Bowling, Michael, and Schapire, Robert~E.
\newblock Apprenticeship learning using linear programming.
\newblock In \emph{Proceedings of the 25th International Conference on Machine
  Learning}, pp.\  1032--1039. ACM, 2008.

\bibitem[Tamar et~al.(2015)Tamar, Chow, Ghavamzadeh, and
  Mannor]{tamar2015policy}
Tamar, Aviv, Chow, Yinlam, Ghavamzadeh, Mohammad, and Mannor, Shie.
\newblock Policy gradient for coherent risk measures.
\newblock In \emph{Advances in Neural Information Processing Systems}, pp.\
  1468--1476, 2015.

\bibitem[Ziebart et~al.(2008)Ziebart, Maas, Bagnell, and
  Dey]{ziebart2008maximum}
Ziebart, Brian~D, Maas, Andrew~L, Bagnell, J~Andrew, and Dey, Anind~K.
\newblock {Maximum Entropy Inverse Reinforcement Learning}.
\newblock In \emph{Proceedings of the Twenty-Third AAAI Conference on
  Artificial Intelligence}, pp.\  1433--1438, 2008.

\end{thebibliography}
\bibliographystyle{icml2017}

\newpage

\section*{Appendix: Risk Averse Trajectory Optimization}
\label{sec-risk-averse}

Our overall strategy is to implement a system that `knows-what-it-knows' about reward evaluations. So far we have considered the problem of computing the robot's uncertainty about reward evaluations. We have not considered the problem of using that reward uncertainty. Here we describe several approaches and highlight an important nuance of planning under a distribution over utility evaluations. 

The most straightforward option is to maximize expected reward under this distribution. However, this will ignore the reward uncertainty we have worked so hard to achieve: planning in expectation under a distribution over utility is equivalent to planning with the mean of that distribution. Instead we will plan in a risk averse fashion that penalizes trajectories which have high variance over their utility. Of course, risk averse planning  is a rich field with a variety of approaches~\citep{markowitz1968portfolio, rockafellar2000optimization, tamar2015policy}. In future work, we intended to explore a variety of risk averse planning methods and evaluate their relative pros and cons for our application.

In this work, we will take a simple approach: given a set of weights $\{w_i\}$ sampled from our posterior $P(w|\wprox, \mprox)$, we will have the agent compute a trajectory that maximizes
reward under the \emph{worst case} $w_i$ in our set. We can do this in one of two ways.

\prg{Trajectory-wide reward} Given a set of weights
$\{w_i\}$ sampled from our posterior $P(w|\wprox)$, we will have the agent compute a trajectory that maximizes
reward under the \emph{worst case} $w_i$ in our set:
\begin{equation}
\xi^* = \argmax_\xi \min_{w\in\{w_i\}} w^\top \phi(\xi). \label{eq-min-full}
\end{equation}
This planning problem is no longer isomorphic to an MDP, as the reward
may not decompose per state. Trajectory optimization in this case can
be done via the linear programming approach described
in \cite{syed2008apprenticeship}. 

\prg{Time-step independent reward} An alternative is to take minimum
over weights on a per state basis:
\begin{equation}
\xi^* = \argmax_\xi \sum_{s_t\in\xi} \min_{w\in\{w_i\}} w^\top \phi(s_t). \label{eq-min-indep}
\end{equation}
This is more conservative, because it allows the minimizer to pick a different reward for each time step.

Directly applying this approach, however, may lead to poor
results. The reason is that, unlike maximizing expected reward, this
planning approach will be sensitive to the particular feature encoding
used. In maximizing expected reward, shifting all feature by a
constant vector $\phi_0$ will not change the optimal trajectory. \emph{The
same is no longer true for a risk averse approach.} 

For example, consider a choice between actions $a_1$ and $a_2$, with
features $\phi_1$ and $\phi_2$ respectively. If we shift the features
by a constant value $-\phi_2$ (i.e., set the feature values for the
second action to 0), then, unless $a_1$ is preferred to $a_2$
for \emph{every} weight in the posterior, the agent will always select
the second action. The zero values of feature encodings are typically
arbitrary, so this is clearly undesireable behavior.

Intuitively, this is because rewards are not absolute, they are relative. Rewards need a reference point. We thus need to compare the reward $w^\top\phi(\xi)$ to \emph{something}: to the reward of some reference features $c_i$.
We will study three approaches: comparing reward to the initial state, to the training feature counts, and to the expected reward across any trajectory.

\prg{Comparing to initial state}
One straightforward approach is to take a particular state or
trajectory and enforce that it has the same evaluation across each
$w_i$. For example, we can enforce that the features for the initial
state state is the 0 vector. This has the desirable property that the
agent will remain in place (or try to) when there is very high
variance in the reward estimates (i.e., the solution to IRD gives
little information about the current optimal trajectory). 

\prg{Comparing to training feature counts} An
third option is to use the expected features \phiprox{} as the
feature offset. In the case, the agent will default to trying to match
the features that it would have observed maximizing \wprox{}
in \mprox.

\prg{Comparing to other trajectories} An alternative is to define $c_i$ as the log of the normalizing
constant for the maximum entropy trajectory distribution:
\begin{equation}
c_i = \log \int_\xi \exp(w_i^\top\phi(\xi))d\xi.
\end{equation}
With this choice of $c_i$, we have that $w_i^\top \phi(\xi) - c_i
= \log P(\xi | w_i)$.  Thus, this approach will select trajectories
that compare relatively well to the options under all $w_i$. Loosely
speaking, we can think of it as controlling for the total amount of
reward available in the MDP.

\begin{figure}
\centering
\includegraphics[width=.98\columnwidth]{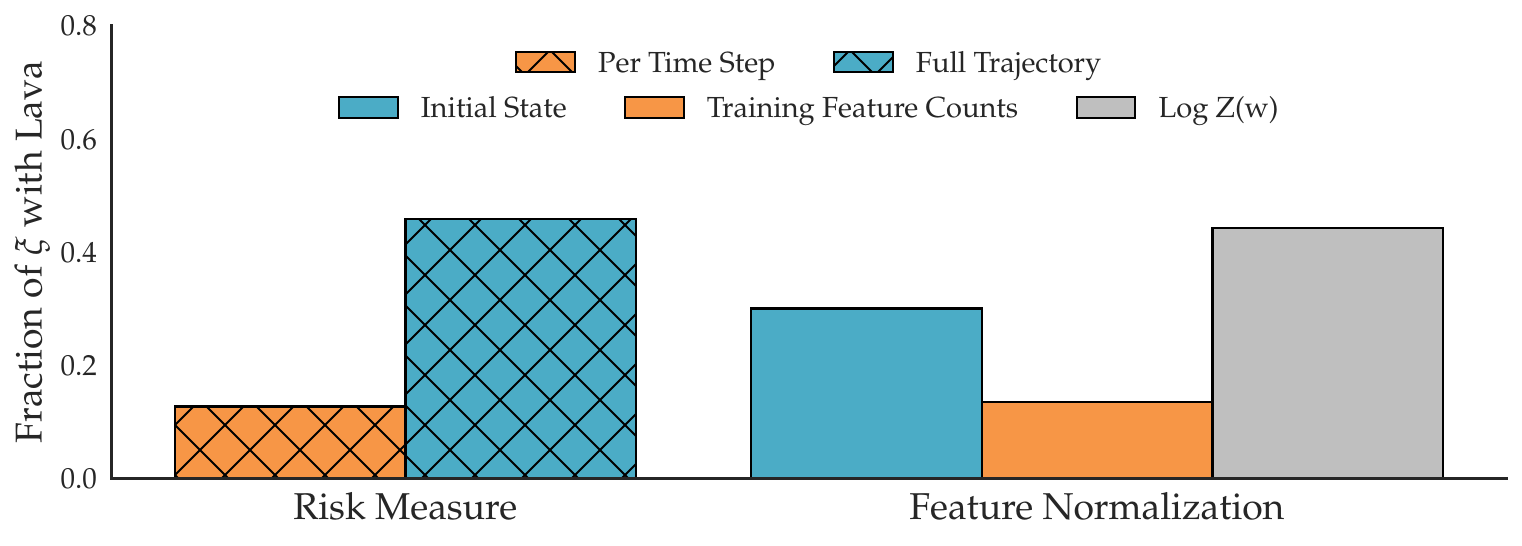}
\caption{{\small \textbf{Left:} We avoid side effects and reward hacking by computing a posterior distribution over reward function and then find a trajectory that performs well under the worst case reward function. This illustrates the impact of selecting this worst case independently per time step or once for the entire trajectory. Taking the minimum per time step increases robustness to the approximate inference algorithms used because we only need one particle in our sample posterior to capture the worst case for each grid cell type. For the full trajectory, we need a single particle to have inferred a worst case for \emph{every} grid cell type at once. \textbf{Right:} The impact of changing the offsets $c_i$. ``Initial State'' fixes the value of the start state to be 0. ``Training Feature Counts'' sets an average feature value from the training MDP to be 0. ``Log Z(w)'' offsets each evaluation by the normalizing from the maximum entropy trajectory distribution. This means that the sum of rewards across a trajectory is the log probability of a trajectory. }} \label{fig-granularity}
\end{figure}

\prg{Evaluation} Before running the full experiment, we did an initial internal comparison to find the best-performing planning method. We did a full factorial across the factors with the side effect feature encoding and the reward hacking feature encoding.

We found that the biggest overall change came from the \textbf{min-granularity} feature. A bar plot is shown in \figref{fig-granularity} (Left). Independently minimizing per time step was substantially more robust. We hypothesize that this is a downstream effect of the approximate inference used. We sample from our belief to obtain a particle representation of the posterior. Independently minimizing means that we need a single particle to capture the worst case for each grid cell type. Performing this minimization across the full trajectory means that a single particle has to faithfully represent the worst case for \emph{every} grid cell type.  

We also saw substantial differences with respect to the \textbf{reward-baseline} factor. \figref{fig-granularity} (Right) shows a bar plot of the results. In this case, setting the common comparison point to be the average feature counts from the training MDP performed best. We believe this is because of the similarity between the train and test scenarios: although there is a new grid cell present, it is still usually possible to find a trajectory that is similar to those available in the training MDP. We hypothesize that correctly making this decision will depend on the situation and we looking forward to exploring this in future work.

\end{document}